\def\BibTeX{{\rm B\kern-.05em{\sc i\kern-.025em b}\kern-.08em
    T\kern-.1667em\lower.7ex\hbox{E}\kern-.125emX}}
\newtheorem{theorem}{Theorem}
\newtheorem{lemma}[theorem]{Lemma}
\newtheorem{proposition}[theorem]{Proposition}
\begin{document}

\title{A Spectral Framework for Graph Neural Operators: Convergence Guarantees and Tradeoffs}

\author{\IEEEauthorblockN{Roxanne Holden}
\IEEEauthorblockA{\textit{Applied Mathematics and Statistics} \\
\textit{Johns Hopkins University}\\
Baltimore, USA \\
rholden3@jh.edu}
\and
\IEEEauthorblockN{Luana Ruiz}
\IEEEauthorblockA{\textit{Applied Mathematics and Statistics} \\
\textit{Johns Hopkins University}\\
Baltimore, USA \\
lrubini1@jh.edu}
}

\maketitle

\begin{abstract}
Graphons, as limits of graph sequences, provide an operator-theoretic framework for analyzing the asymptotic behavior of graph neural operators. Spectral convergence of sampled graphs to graphons induces convergence of the corresponding neural operators, enabling transferability analyses of graph neural networks (GNNs). This paper develops a unified spectral framework that brings together convergence results under different assumptions on the underlying graphon, including no regularity, global Lipschitz continuity, and piecewise-Lipschitz continuity. The framework places these results in a common operator setting, enabling direct comparison of their assumptions, convergence rates, and tradeoffs. We further illustrate the empirical tightness of these rates on synthetic and real-world graphs.
\end{abstract}

\begin{IEEEkeywords}
graph neural operator, graphon, convergence rates, graph neural networks, transferability.
\end{IEEEkeywords}

\section{Introduction}
\label{sec:intro}
Graph neural networks (GNNs) are widely used in drug discovery \cite{fang2025recent, satheeskumar2025enhancing}, social networks \cite{ben2024enhancing, sharma2024survey}, recommendation systems \cite{wu2022graph}, and Natural Language Processing \cite{liu2022graph, kumar2022natural, wu2021deep}.
GNNs operate on graph-structured data via message passing and aggregation \cite{gnns}, but training on large graphs is computationally expensive.
Studying GNN behavior on \emph{families} of large graphs, exploiting low-dimensional structure such as finite rank or bandlimitedness, offers statistical guarantees and mitigates scalability challenges \cite{xu2018rates, wolfe2013nonparametric}.

A framework for analyzing families of large graphs is given by \emph{graphons}: symmetric measurable functions $\bbW:[0,1]^2 \to [0,1]$ representing the probability of an edge between two nodes placed in the interval $[0,1]$ \cite{whatisgraphon, laplacianvizuete}.
Graphons are limit objects of graph sequences under the cut distance, a metric that accounts for arbitrary relabelings, providing an avenue for analyzing the asymptotic behavior of graph neural operators and their transferability properties \cite{transferabilityruiz}.
As graphons define bounded, symmetric operators, their spectra capture the frequencies on which graph neural operators act, and spectral convergence of graph sequences to graphon limits translates to operator convergence \cite{lovaszlimits}.

Existing spectral arguments for operator convergence of GNNs on graph sequences, however, are derived under heterogeneous assumptions and measured in incomparable metrics (e.g., cut distance, Frobenius norms, or $L^2$ operator norms), making direct comparison of reported rates difficult. The literature provides a collection of bounds that are individually informative \cite{lovaszlimits, levie, transferabilityruiz, Avella_Medina_2020} but collectively hard to interpret. This fragmentation limits the practical usefulness of these results, as the structural reasons why some graph families exhibit markedly faster convergence than others are obscured.

In this work, we provide a unifying analytical perspective that places these heterogeneous convergence results into a single spectral norm framework. We show that existing graph neural operator convergence bounds can be systematically understood as the combination of two components: Weyl-type spectral perturbation inequalities \eqref{eqn:weyls}, which control eigenvalue deviations via operator-norm kernel differences, and graphon operator approximation errors whose rates depend explicitly on structural regularity assumptions. This decomposition yields a principled comparison mechanism, clarifying that variation in reported convergence rates arises primarily from differences in the assumed graphon structure---and consequently in the generality of the results, particularly their invariance to node relabeling---rather than from fundamentally incompatible analytical techniques.

Within this unified viewpoint, we derive explicit spectral convergence rates for piecewise-Lipschitz graphons, a practically relevant regime for which prior work provides network centrality approximation guarantees \cite{Avella_Medina_2020} but not directly comparable spectral upper bounds. These rates are directly applicable to graph neural operator stability and GNN transferability analyses in moderately structured settings where global smoothness is unrealistic but local regularity is plausible.

%We condense convergence results established under disparate assumptions into a single spectral norm perspective that enables direct, rate-level comparison across graph families, transforming a scattered collection of guarantees into a coherent comparative theory. Our focus is on \emph{graph neural operator convergence}, understood as eigenvalue stability under graphon limits. Section~\ref{sec:notation} introduces basic definitions. Section~\ref{sec:bounds} presents convergence rates under no assumptions, Lipschitz continuity, and piecewise-Lipschitz continuity. A comparison of the bounds is given in Section~\ref{ssec:comparison}, with numerical illustrations in Section~\ref{ssec:examples}, and concluding remarks in Section~\ref{sec:conclusion}.
Building on this unified perspective, we establish spectral convergence rates under no regularity, global Lipschitz continuity, and piecewise-Lipschitz continuity, placing these regimes on a common analytical footing. This unified treatment enables direct, rate-level comparison across assumptions, revealing a fundamental tradeoff between convergence speed and generality. Stronger structural regularity yields faster spectral convergence and greater operator stability, but at the cost of less or no invariance to node relabelings. The remainder of the paper is organized as follows. Section~\ref{sec:notation} introduces basic definitions. Section~\ref{sec:bounds} presents the convergence results. Section~\ref{ssec:comparison} compares the bounds across regimes, Section~\ref{ssec:examples} provides numerical illustrations, and Section~\ref{sec:conclusion} concludes.

% The spectral argument for operator convergence of GNNs on graph sequences uses two facts: (i) Weyl's inequality bounds eigenvalue perturbations by operator-norm kernel differences \cite{horn1994topics}; and (ii) graphon convergence in cut distance implies convergence in cut and operator norms under relabeling \cite[Proposition 4]{ruiz2021graphon}. Together, these yield spectral convergence rates from cut distance, at a $\mathcal{O}(1/\sqrt[4]{\log n})$ rate \cite{lovaszlimits, levie}.

% Sharper rates follow from structural assumptions on the graphon. 
% Ruiz et al.\ \cite{transferabilityruiz} assume global Lipschitz continuity under fixed labeling, yielding $\mathcal{O}(\sqrt{\log n / n})$ rates, though this fails under arbitrary relabelings \cite[Figure 4]{whatisgraphon}. 
% Avella Medina et al.\ \cite{Avella_Medina_2020} assume piecewise-Lipschitz structure, giving intermediate $\mathcal{O}(\sqrt[4]{\log n/n})$ rates while permitting flexible labeling within pieces. 
% Lipschitz constants can be estimated in practice via sort-and-smooth or related methods \cite{chan2014consistenthistogramestimatorexchangeable, airoldi2013stochastic, klopp2019optimal}.

%
\section{Preliminary Definitions}
\label{sec:notation}
We consider undirected graphs \( G = (V, E, W) \) consisting of a set of nodes \( V \), a set of edges \( E \subseteq V \times V \), and a weight function \(W: E \to \mathbb{R}\), assigning a real-valued weight \( w_{ij} \) to each edge \( (i,j) \in E \). An undirected graph \( G = (V, E, W) \) can be represented by its symmetric adjacency matrix \( A \in \mathbb{R}^{n \times n} \), where \( A_{ij} = w_{ij} \) if \( (i,j) \in E \), and \( A_{ij} = 0 \) otherwise. We denote a sequence of graphs $\{G_n\}$, with $n \in\naturals\setminus \{0\}$. 

\subsection{Graphons} \label{sbs:graphons}

Graphons, symmetric measurable functions $\bbW: [0, 1]^2 \rightarrow [0,1]$, can be seen both as generative models for graphs and as limit objects of convergent graph sequences \cite{lovaszlimits, borgs2008convergent}. Using a graphon as a generative model, we can construct an $n$-node graph $G_n$ from $\bbW$ by sampling points $u_i \in [0,1]$---associated with nodes $i \in [1, 2, ...,n]$---and assigning edge weight $\bbW(u_i,u_j)$ to edge $(i,j)$ to obtain a complete weighted graph; or, connecting edge $(i,j)$ with probability $p_{ij} = \bbW(u_i,u_j)$ to obtain a stochastic unweighted graph. In either case, we say $G_n$ is sampled from $\bbW$. 

A sequence of graphs $\{G_n\}$ is said to converge if, for every fixed finite motif $F$ (e.g., a triangle or $k$-cycle), the proportion of adjacency-preserving mappings (homomorphisms) from $F$ into $G_n$ stabilizes as $n \to \infty$ \cite{borgs2008convergent, lovaszlimits}. This proportion, called the homomorphism density $t(F,G_n)$, measures how frequently $F$ appears in $G_n$. Graphons are the limits of such sequences, in that the densities of homomorphisms $t(F,G_n)$ converge to the graphon homomorphism density. Explicitly, the graphon homomorphism density $t(F,\mathbf{W})$ %is defined via an integral over $[0,1]$, 
represents the probability of sampling $F$ from the graphon $\mathbf{W}$. We say that $\{G_n\}$ converges to $\mathbf{W}$ if and only if $t(F,G_n) \to t(F,\mathbf{W})$ for all $F$; in this case, $\mathbf{W}$ is the limit graphon. 

\subsection{Cut norm and cut distance}

The cut norm of a graphon $\mathbf{W}$ measures the strongest concentration of connections between vertex subsets
\[
\|\mathbf{W}\|_{\Box} = \sup_{S,T \subset [0,1]} \left| \int_{S \times T} \mathbf{W}(u,v)\text{d}u\text{d}v \right|.
\]
The cut distance between two graphons, $\mathbf{U}$, $\mathbf{V}$, measures how different they are up to relabeling. It is defined as the infimum of their cut norm difference over all measure-preserving bijections $\phi : [0,1] \to [0,1]$:
\begin{equation} \label{eqn:cut_distance_defn}
\delta_{\Box}(\mathbf{U},\mathbf{V}) = \inf_{\phi \in \Phi} \|\mathbf{U} - \mathbf{V}^{\phi}\|_{\Box}.
\end{equation}
This accounts for arbitrary vertex labels and ensures the distance is label-invariant \cite{lovaszlimits}.

Convergent graph sequences in the homomorphism density sense also converge in the cut distance. This is formalized by defining graphons $\bbW_{G_n}$ induced by the graphs $G_n$, which allows comparison of graphs to graphons through the above-defined distance.
%Additionally, every undirected graph $G$ admits a graphon, referred to as the induced graphon, $\bbW_G$. 
For an $n$-node graph $G_n$, an induced graphon is created by constructing a regular partition $I_1 \cup ... \cup I_n$ of $[0,1]$, where $I_j = [(j-1)/n, j/n), 1 \leq j \leq n-1$ and $I_n = [(n-1)/n, 1]$, and with $\mathbb{I}$ as the indicator function defining $$\bbW_{G_n} (u,v)=\sum_{j=1}^n\sum_{k=1}^{n}[A]_{jk}\mathbb{I}(u \in I_j)\mathbb{I}(v \in I_k).$$ 
If $\{G_n\}$ converges to $\bbW$ in the homomorphism density sense,
$\delta_{\Box}(\bbW_{G_n}, \bbW) \to 0 \text{ as } n \to \infty$, establishing the cut distance as the standard metric for graphon convergence \cite{lovaszlimits, borgs2008convergent}.

\subsection{The graphon operator and Weyl's inequality}

\begin{figure}[t]
\centering
\includegraphics[width=\linewidth]{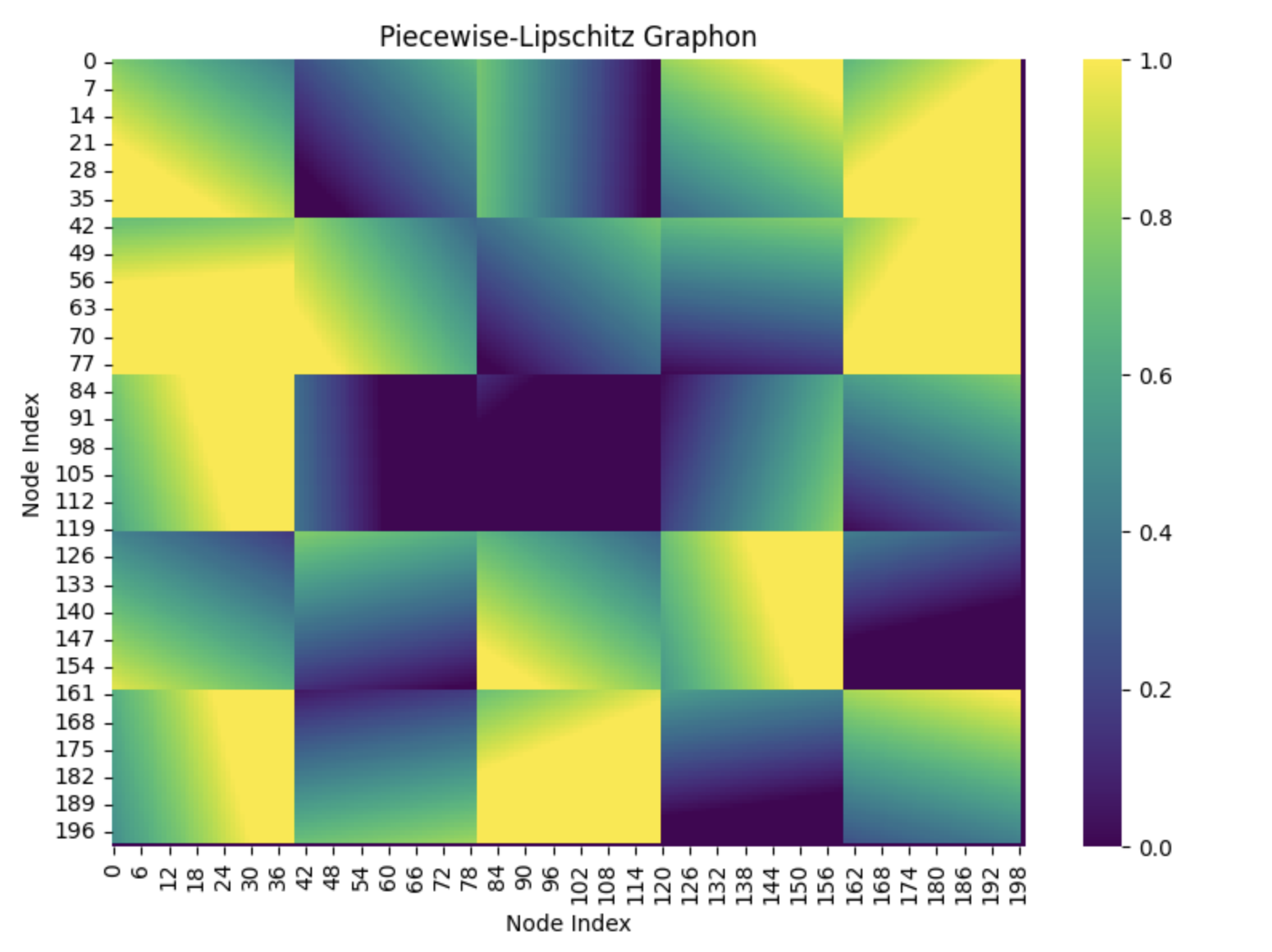}
\caption{Discretized approximation of a piecewise-Lipschitz graphon on a $200 \times 200$ grid, $K=4$, and per-piece Lipschitz constant $\ccalL_k \leq 4$.}
\label{fig:plgraphon}
\end{figure}

As $\bbW$ is bounded and symmetric, it defines an integral linear operator $T_{\bbW}$ on $L^2([0,1])$. More specifically, given a function $X \in L_2([0,1])$---mapping ``graphon nodes'' $u \in [0,1]$ to the reals---the graphon operator acts on it as:
\begin{equation} \label{eqn:graphon_op}
T_\bbW X = \int_{0}^1 \bbW(u, \cdot) X(u) \text{d} u.
\end{equation}
This is the operator that parametrizes graphon convolutions in graphon neural operators such as graphon neural networks (WNNs), which are the idealized limit objects of GNNs on sequences of graphs converging to graphons \cite{transferabilityruiz, ruiz2021graphon, levie}. 

Since graphon operators are compact, self-adjoint operators on $L^2([0,1])$, their spectra consist of real eigenvalues that capture the frequencies on which graphon neural operators act \cite{ruiz2021graphon}. To analyze convergence of graph neural operators to their graphon counterparts, we need control over how the spectra of sampled graph operators deviate from the limiting graphon operator. Weyl's inequality adapted to graphons provides this type of stability guarantee, bounding the difference between eigenvalues of two operators by their operator-norm distance \cite{horn1994topics}.
%Finally, we leverage Weyl's inequality adapted to graphons.
For a graph $G_n$, sampled from a graphon $\bbW$, with induced graphon $\bbW_{G_n}$,  we have 
\begin{equation} \label{eqn:weyls}
|\lambda_i(\bbW)-\lambda_i(\bbW_{G_n})| \leq \|T_{\bbW} - T_{\bbW_{G_n}}\|_{2}
\end{equation}
%Consequently, a small distance in terms of the operator norm between an induced graphon operator of a sampled graph and a graphon guarantees stability of their eigenvalues. 
where in a slight abuse of notation we use $\|\cdot\|_{2}$ to denote %the $L_2$ norm and 
the operator norm induced by the $L_2$ norm. 
\section{Graph Neural Operator Convergence Rates}
\label{sec:bounds}
This section presents operator-level convergence results for graph sequences to their graphon limits. The rates are derived by combining graphon convergence bounds with Weyl's inequality \eqref{eqn:weyls}, transferring these bounds to operator spectra. We focus on \emph{graph neural operator convergence}, measured via eigenvalue stability. Each subsection states structural assumptions on the graphon and the resulting convergence rate.

\subsection{Standard case}
\label{ssec:standard}

Without assumptions on the limit graphon $\bbW$, GNN convergence bounds via Weyl's inequality \eqref{eqn:weyls} follow directly from cut distance convergence, which can be related to the operator norm as summarized in Lemma \ref{prop4ruiz}. For a symmetric kernel $\bbK: [0,1]^2 \to [-1,1]$ (e.g., $\bbW-\bbW'$), this lemma connects its $L_2$-induced operator norm to its cut norm.

\begin{lemma}[Adapted from \cite{ruiz2021graphon}] \label{prop4ruiz}
Let $\bbK: [0,1]^2 \to [-1,1]$.
%, where $\phi$ is the measure preserving bijection that achi \red{??}. 
Then, 
    \begin{align}
    \|\bbK\|_{\Box} \leq \|T_\bbK\|_{2} \leq \sqrt{8\|\bbK\|_{\Box}}.
    \end{align}
\end{lemma}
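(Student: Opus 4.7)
The lower bound is routine; the upper bound is the delicate direction and will proceed by converting the $L^2$-operator norm into a cut-norm quantity via self-adjointness of $T_\bbK$.

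For the lower bound, I would rewrite the cut-norm integrand as an $L^2$ inner product, $\int_{S\times T}\bbK(u,v)\,\mathrm{d}u\,\mathrm{d}v = \langle T_\bbK \mathbb{1}_T, \mathbb{1}_S\rangle$, and bound it via Cauchy--Schwarz: $|\langle T_\bbK \mathbb{1}_T, \mathbb{1}_S\rangle| \leq \|T_\bbK\|_2 \|\mathbb{1}_T\|_2 \|\mathbb{1}_S\|_2 \leq \|T_\bbK\|_2$, using the fact that indicator $L^2$-norms on $[0,1]$ are at most $1$. Taking the supremum over $S, T$ then yields $\|\bbK\|_\Box \leq \|T_\bbK\|_2$. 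This step is immediate.

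The upper bound will rest on three ingredients. First, I would establish that the cut norm controls an $L^\infty$-bilinear form: for $\|f\|_\infty, \|g\|_\infty \leq 1$, one has $|\langle T_\bbK f, g\rangle| \leq 4\|\bbK\|_\Box$. The argument is a signed layer-cake: split $f = f_+ - f_-$ and $g = g_+ - g_-$, then use $f_+(u) = \int_0^1 \mathbb{1}_{\{f_+ > t\}}(u)\,\mathrm{d}t$ (and likewise for $g_+$, $f_-$, $g_-$) to reduce each of the four cross-terms $\langle T_\bbK f_\pm, g_\pm\rangle$ to a double integral of indicator pairings, each bounded by $\|\bbK\|_\Box$ by definition of the cut norm. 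Second, I would invoke self-adjointness of $T_\bbK$ (a consequence of $\bbK$ being symmetric and essentially bounded) to write $\|T_\bbK\|_2 = |\lambda|$ for some eigenvalue $\lambda$ with unit eigenfunction $\phi$, and apply Cauchy--Schwarz to the eigenvalue equation $T_\bbK\phi = \lambda\phi$ to obtain $|\lambda\phi(u)| \leq \|\bbK(u,\cdot)\|_2 \|\phi\|_2 \leq 1$; in particular, $g := \lambda\phi$ lies in the unit $L^\infty$-ball. Third, I would close the loop: multiplying the Rayleigh identity $\lambda = \langle T_\bbK \phi, \phi\rangle$ by $\lambda^2$ and substituting $g = \lambda\phi$ on both sides gives $\lambda^3 = \langle T_\bbK g, g\rangle$, whence the first ingredient applies and yields $\lambda^3 \leq 4\|\bbK\|_\Box$.

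The hard part will be converting $\lambda^3 \leq 4\|\bbK\|_\Box$ into the stated $\lambda^2 \leq 8\|\bbK\|_\Box$ with matching constants. The elementary bound $\lambda \leq \|\bbK\|_\infty \leq 1$ handles the regime $\lambda \geq 1/2$ directly; for smaller $\lambda$ a brief truncation of $\phi$ at a threshold chosen in terms of $\lambda$ and $\|\bbK\|_\Box$ is required, as done in~\cite{ruiz2021graphon}. The structural insight---self-adjointness together with the $L^\infty$-control of the top eigenfunction reduces $L^2$-operator norm to cut norm---is the crux; everything else is bookkeeping of constants.
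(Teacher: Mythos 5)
Your lower-bound argument is correct and is the standard one. The trouble is in the upper bound.

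The chain you set up, $\|\phi\|_\infty\le 1/|\lambda|$, $g:=\lambda\phi$, $\lambda^3=\langle T_\bbK g,g\rangle$, and the bilinear-form estimate $|\langle T_\bbK g,g\rangle|\le 4\|\bbK\|_\Box$, yields $|\lambda|\le (4\|\bbK\|_\Box)^{1/3}$. You correctly observe that this is not the stated bound and that for $|\lambda|\ge 1/2$ one can recover $\lambda^2\le 2\lambda^3\le 8\|\bbK\|_\Box$. But the remaining case, $|\lambda|<1/2$, is exactly where the cube root is genuinely \emph{weaker} than $\sqrt{8\|\bbK\|_\Box}$ (compare $(4c)^{1/3}$ and $\sqrt{8c}$ for $c<1/32$), and the ``brief truncation of $\phi$'' you invoke does not obviously close this gap: truncating $\phi=\phi_1+\phi_2$ at level $M$ with $\|\phi_1\|_\infty\le M$ and (Chebyshev) $\|\phi_2\|_1\le 1/M$ gives
\begin{equation*}
|\langle T_\bbK\phi,\phi\rangle|\;\le\;4M^2\|\bbK\|_\Box+\frac{2}{M}+\frac{1}{M^2},
\end{equation*}
and optimizing over $M$ again produces an $\mathcal O(\|\bbK\|_\Box^{1/3})$ bound, not $\mathcal O(\|\bbK\|_\Box^{1/2})$. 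In short, everything you wrote is a rigorous proof of $\|T_\bbK\|_2\le(4\|\bbK\|_\Box)^{1/3}$, but the promotion to the stated square-root rate is the actual content of the lemma, and that step is missing, not ``bookkeeping of constants.'' The paper itself does not reprove this (it cites \cite{ruiz2021graphon}), but a self-contained route that does produce the $\sqrt{\cdot}$ exponent is interpolation: from $\|T_\bbK\|_{1\to\infty}\le\|\bbK\|_\infty\le 1$ and your own $\|T_\bbK\|_{\infty\to 1}\le 4\|\bbK\|_\Box$, Riesz--Thorin at $\theta=1/2$ gives $\|T_\bbK\|_{2\to 2}\le\sqrt{\|T_\bbK\|_{1\to\infty}\,\|T_\bbK\|_{\infty\to 1}}\lesssim\sqrt{\|\bbK\|_\Box}$ (the constant $\sqrt 8$ absorbs the real-vs-complex bookkeeping). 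I would replace the eigenfunction/truncation sketch with this interpolation step, or else spell out the actual argument from \cite{ruiz2021graphon} rather than asserting it.
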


For sequences of stochastic graphs $\{G_n\}$ sampled from $\bbW$ as described in Sec. \ref{sbs:graphons}, explicit convergence rates for the cut distance \eqref{eqn:cut_distance_defn} are given by \cite[Second Sampling Lemma]{lovaszlimits}, adapted in Lemma \ref{ssl} below.

\begin{lemma}[Second Sampling Lemma] \label{ssl}
Let $G_n$ be a graph randomly sampled from an arbitrary graphon $\bbW$ with associated induced graphon $\bbW_{G_n}$. With probability at least $1 - \exp(-n/(2\log n))$,
    \begin{align}
\delta_\square(\bbW_{G_n},\bbW) = \inf_{\phi \in \Phi}\|{\bbW_{G_n}} - \bbW^{\phi}\|_{\Box} \leq \frac{22}{\sqrt{\log n}}
    \end{align}
where $\Phi$ is the set of measure-preserving bijections on $[0,1]$.
\end{lemma}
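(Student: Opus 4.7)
The plan is to bound $\delta_\Box(\bbW_{G_n}, \bbW)$ by a three-term triangle inequality that passes through a step-function approximation of $\bbW$, following the Second Sampling Lemma argument in \cite{lovaszlimits}.

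First, I would invoke a Frieze--Kannan-type weak regularity lemma to produce a step graphon $\bbU$ with $k \leq 2^{O(1/\epsilon^2)}$ blocks satisfying $\|\bbW - \bbU\|_\Box \leq \epsilon$, where $\epsilon$ is a free parameter to be tuned at the end. Then, after choosing a measure-preserving bijection $\phi$ that reorders the sampled points $S = \{u_1, \ldots, u_n\}$ so that the empirical block frequencies match the true block sizes of $\bbU$, I would decompose
\[
\bbW^\phi - \bbW_{G_n} = (\bbW^\phi - \bbU^\phi) + (\bbU^\phi - \bbU[S]) + (\bbU[S] - \bbW_{G_n}),
\]
where $\bbU[S]$ denotes the step graphon induced by the values of $\bbU$ at the sampled pairs $(u_i,u_j)$. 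I would then bound the cut norm of each summand.

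The first term has cut norm at most $\epsilon$ by the weak regularity guarantee. For the second term, because $\bbU$ is constant on only $k$ blocks, the cut-norm supremum over subsets of $[0,1]$ effectively reduces to a supremum over subsets of $[k]$; a Chernoff/Hoeffding bound on the $k$ block frequencies then gives $\|\bbU^\phi - \bbU[S]\|_\Box = \mathcal{O}(\sqrt{\log k / n})$ with failure probability at most $\exp(-\Omega(n/\log n))$. For the third term, a bounded-differences (McDiarmid) argument applied to the cut-norm functional, which is Lipschitz in each sampled coordinate, shows concentration around its expectation $\mathcal{O}(\epsilon)$ with comparable failure probability; in the stochastic-graph sampling regime one additionally absorbs the per-edge Bernoulli noise, which contributes an $\mathcal{O}(1/\sqrt{n})$ term of lower order.

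Combining the three bounds and choosing $\epsilon \sim 1/\sqrt{\log n}$ to balance $\epsilon$ against the sampling error $\sqrt{\log k / n} \sim \sqrt{1/(n\epsilon^2)}$ yields the $22/\sqrt{\log n}$ rate, with the explicit constant $22$ coming from tracking the numerical factors through the optimization. The main obstacle is the concentration step for the second term: the cut norm is a supremum over continuum-many pairs of subsets in $[0,1]^2$, so a naive union bound is hopeless; the crucial move is to pass through $\bbU$ to collapse this supremum to a finite class of test sets indexed by its $k$ blocks before invoking Chernoff/McDiarmid, and to do so while keeping $k$ small enough ($k \leq 2^{O(\log n)}$) that the resulting $\log k$ factor does not overwhelm the tail probability $\exp(-n/(2\log n))$.
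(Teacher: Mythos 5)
The paper does not prove this lemma: it is stated as ``Adapted from \cite{lovaszlimits}'' and cited directly as the Second Sampling Lemma from Lov\'asz, so there is no in-paper argument to compare against. Your reconstruction correctly identifies the architecture of the cited proof: weak regularity to produce a step graphon $\bbU$ of bounded complexity, a measure-preserving relabeling aligning $\bbU$ with the empirical block structure, concentration arguments for the block-frequency and Bernoulli edge-noise terms, and a final choice of $\epsilon \sim 1/\sqrt{\log n}$ so that the weak-regularity error dominates. One quantitative slip worth flagging: the cut norm of $\bbU^\phi - \bbU[S]$ is controlled by the total-variation distance between the true and empirical block-frequency vectors, and a union bound over the $\sim 4^k$ block-subset pairs (equivalently, a $k$-degree-of-freedom Chernoff bound on TV) gives a deviation of order $\sqrt{k/n}$, not $\sqrt{\log k/n}$. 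Since $k \le 2^{O(1/\epsilon^2)} = n^{O(1)}$ with a small enough exponent, $\sqrt{k/n}$ remains negligible and the $\mathcal{O}(1/\sqrt{\log n})$ rate is unchanged, but this correction does constrain the constant in $\epsilon = a/\sqrt{\log n}$, which is where the explicit $22$ (and the $\exp(-n/(2\log n))$ tail) actually come from. Aside from that, your three-term decomposition and the key observation that passing through $\bbU$ collapses the continuum cut-norm supremum to a finite class of block subsets are exactly the moves in the cited proof.
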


Combining Weyl's inequality with Lemmas \ref{prop4ruiz} and \ref{ssl} recovers the first convergence rate for graphon neural operators, without any structural assumptions on $\bbW$.

\begin{proposition}[Standard case]
Let $G_n$ be a graph randomly sampled from a graphon $\bbW$ with induced graphon $\bbW_{G_n}$. With probability at least $1 - \exp(-n/(2\log n))$, the eigenvalues of the corresponding operators satisfy
\[
|\lambda_i(\bbW_{G_n}) - \lambda_i(\bbW)| = \mathcal{O}\!\left(\frac{1}{\sqrt[4]{\log n}}\right).
\]
\end{proposition}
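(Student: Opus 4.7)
The plan is to chain the three ingredients assembled in this section: Weyl's inequality \eqref{eqn:weyls} relating eigenvalue perturbations to the operator-norm kernel distance, Lemma~\ref{prop4ruiz} relating the $L_2$-induced operator norm to the cut norm, and Lemma~\ref{ssl} controlling the cut distance between $\bbW_{G_n}$ and $\bbW$. The key observation that glues these steps together is that any measure-preserving bijection $\phi \in \Phi$ leaves the spectrum of a graphon operator unchanged, so Weyl's inequality may be applied after relabeling $\bbW$ by an arbitrary such $\phi$, and the infimum over $\Phi$ can then be pushed inside the bound to produce a cut distance on the right-hand side.

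Concretely, I would fix $\phi \in \Phi$ and note that $T_{\bbW^\phi}$ and $T_\bbW$ share the same eigenvalues, since $\phi$ is measure-preserving. Weyl's inequality applied to $T_{\bbW^\phi}$ and $T_{\bbW_{G_n}}$ then yields $|\lambda_i(\bbW) - \lambda_i(\bbW_{G_n})| \leq \|T_{\bbW^\phi} - T_{\bbW_{G_n}}\|_2$. The symmetric kernel $\bbW^\phi - \bbW_{G_n}$ takes values in $[-1,1]$, so Lemma~\ref{prop4ruiz} upper-bounds the right-hand side by $\sqrt{8\,\|\bbW^\phi - \bbW_{G_n}\|_\Box}$. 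Because $\phi$ was arbitrary and the left-hand side is $\phi$-independent, taking the infimum over $\Phi$ replaces the cut-norm difference under the square root by $\delta_\Box(\bbW_{G_n},\bbW)$, by the definition \eqref{eqn:cut_distance_defn}.

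Next, I would invoke Lemma~\ref{ssl} to substitute the high-probability tail bound $\delta_\Box(\bbW_{G_n},\bbW) \leq 22/\sqrt{\log n}$, valid on an event of probability at least $1 - \exp(-n/(2\log n))$. Chaining the three inequalities on this event gives $|\lambda_i(\bbW) - \lambda_i(\bbW_{G_n})| \leq \sqrt{176}/(\log n)^{1/4} = \mathcal{O}(1/\sqrt[4]{\log n})$, which is the advertised rate.

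The only conceptually nontrivial step is the spectral invariance of $T_\bbW$ under measure-preserving relabelings; it is what licenses taking the infimum over $\Phi$ and thereby passes the bound from a fixed-labeling operator-norm distance to the label-invariant cut distance. Without this observation one is stuck with the operator-norm distance under a single labeling, which need not vanish in the absence of structural assumptions on $\bbW$. Everything else is a mechanical composition of the three stated results, so no further technical obstacle arises.
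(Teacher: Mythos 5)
Your proposal is correct and follows essentially the same route as the paper: spectral invariance of $T_\bbW$ under measure-preserving relabelings, then Weyl's inequality, then Lemma~\ref{prop4ruiz} to pass from operator norm to cut norm, then Lemma~\ref{ssl}. The only (minor, and arguably cleaner) difference is that you fix an arbitrary $\phi$ and take the infimum at the end rather than assuming the infimum in \eqref{eqn:cut_distance_defn} is attained, which sidesteps a small rigor issue in the paper's phrasing.
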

\begin{proof}
%Combining Weyl's inequality and Lemmas \ref{prop4ruiz} and \ref{ssl} yields an upper bound on the rate of the convergence of the spectra of $G_n$---and thus of the associated graph neural operators---without structural assumptions on $\bbW$.
%Let $G_n$ be a graph randomly sampled from graphon $\bbW$ with associated induced graphon $\bbW_{G_n}$ and 
Let $\phi$ be the measure preserving bijection achieving the infimum in Lemma \ref{ssl}. As applying any measure preserving bijection over the unit interval will not affect the spectrum of $T_\bbW$, by Weyl's inequality \eqref{eqn:weyls} and Lemma \ref{prop4ruiz} we have
\begin{equation}
    \begin{split}
        |\lambda_i(\bbW_{G_n}) - \lambda_i(\bbW)| & = |\lambda_i(\bbW_{G_n}) - \lambda_i(\bbW^{\phi})|\\
        &\leq \|T_{\bbW_{G_n}} - T_{\bbW^{\phi}}\|_2 \\
        &\leq \sqrt{8\inf_{\phi \in \Phi}\|\bbW_{G_n} - \bbW^{\phi}\|_{\Box}}\\
        \end{split}
    \end{equation}
Lemma \ref{ssl} implies the stated rate $\mathcal{O}(\tfrac{1}{\sqrt[4]{\log(n)}})$. 

\end{proof}

Under standard assumptions, the resulting spectral bounds are widely applicable due to their permutation-invariance: vertex relabelings have no effect on the estimate. Although such bounds are asymptotically valid, they are often too loose to yield meaningful insight at finite graph sizes. We therefore examine the effect of imposing additional structural assumptions on the graphon $\mathbf{W}$.
%Thus, we have a graph neural operator convergence rate of \( \mathcal{O}(1/\sqrt[4]{\log{n}}) \). 

%\red{and induced by the cut distance metric for graph signals in \cite{levie}, which bounds the $L_2$ induced norm of a graphon operator by its cut norm. In particular,}

\subsection{Lipschitz case}
\label{ssec:lipschitz}

At the cost of generality, we can impose Lipschitz continuity as an additional structural assumption on the graphon $\bbW$. Under this assumption, the Hilbert-Schmidt (HS) norm difference between a graphon and a graph randomly sampled from it can be bounded as follows.

\begin{lemma}[Adapted from \cite{transferabilityruiz}]
\label{lipbound}
Let $G_{n}$ be a graph randomly sampled from a Lipschitz graphon $\bbW$ with associated induced graphon $\bbW_{G_{n}}$. With probability at least $1-\chi \times [1-2x_1] \times [1-x_2]$,
\begin{equation}
\begin{split}
\|\bbW - \bbW_{G_{n}}\|_2 &\leq \frac{2\ccalL_{\bbW}}{n}\log\!\left(\frac{(n+1)^2}{\log(1-x_1)^{-1}}\right)\\
&\quad+ \frac{1}{n}\sqrt{4n\log \!\Big(\tfrac{2n}{\chi}\Big)},
\end{split}
\end{equation}
where $\ccalL_{\bbW}$ is the Lipschitz constant of the graphon, $x_1, x_2 \in (0, 0.3]$, $\chi > 0$, and $n \geq \tfrac{4}{x_2}$.
\end{lemma}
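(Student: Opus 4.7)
The plan is to decompose the error by triangle inequality into a deterministic piece controlled by the Lipschitz hypothesis and a stochastic piece controlled by matrix concentration. To that end, I would introduce an intermediate ``template'' step graphon $\bbW'$ built on the same regular partition $\{I_i\}$ that defines $\bbW_{G_n}$: letting $u_{(1)} \leq \cdots \leq u_{(n)}$ denote the order statistics of the uniform samples drawn to generate $G_n$, set $\bbW'(u,v) = \bbW(u_{(i)}, u_{(j)})$ for $(u,v) \in I_i \times I_j$. Then
\begin{equation}
\|\bbW - \bbW_{G_n}\|_2 \;\leq\; \|\bbW - \bbW'\|_2 \;+\; \|\bbW' - \bbW_{G_n}\|_2,
\end{equation}
where the first term isolates the error of discretizing $\bbW$ at the sampled points, and the second isolates the randomness of the Bernoulli edge draws.

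For the deterministic term, I would apply the Lipschitz hypothesis pointwise: for $(u,v) \in I_i \times I_j$, $|\bbW(u,v) - \bbW(u_{(i)}, u_{(j)})| \leq \ccalL_{\bbW}(|u - u_{(i)}| + |v - u_{(j)}|)$, so the error is controlled by the maximum uniform spacing $\max_i (u_{(i)} - u_{(i-1)})$ of the order statistics. A standard tail bound on this maximum spacing---obtained from the exponential representation of uniform spacings---yields, with probability at least $1-2x_1$, that the gap is at most $\tfrac{1}{n}\log\!\big((n+1)^2/\log(1-x_1)^{-1}\big)$. Propagating this through the pointwise Lipschitz inequality and integrating over the unit square produces the first summand.

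For the stochastic term, observe that $\bbW' - \bbW_{G_n}$ is a block-constant kernel whose block values are the independent centered Bernoulli variables $\bbW(u_{(i)}, u_{(j)}) - A_{ij}$. Its $L_2$-induced operator norm therefore equals $\|E\|/n$, where $E$ is the resulting symmetric $n \times n$ noise matrix with bounded entries. A matrix Hoeffding/Bernstein-type inequality---whose sample-size hypothesis is encoded by the condition $n \geq 4/x_2$---bounds $\|E\| \leq \sqrt{4n\log(2n/\chi)}$ with probability at least $1-\chi$, producing the second summand after rescaling by $1/n$.

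The hard part will be matching the exact logarithmic form $\log((n+1)^2/\log(1-x_1)^{-1})$ in the order-statistics bound, since several variants of the uniform-spacing tail estimate exist and one must be careful tracking the constants coming from the survival function of the underlying exponentials; by comparison the matrix concentration step is routine once the bounded-entry structure is identified. Finally, I would combine the two high-probability events via a union bound, together with the regularity condition on $x_2$ that certifies validity of the matrix inequality, yielding the stated probability and completing the proof.
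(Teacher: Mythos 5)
Your two-term decomposition---a deterministic template piece controlled by the Lipschitz hypothesis and uniform-spacing tail bounds, plus a stochastic piece controlled by matrix concentration---is indeed the strategy underlying this lemma as stated in Ruiz et al., and the mechanism you sketch for each piece (pointwise Lipschitz bound $|\bbW(u,v)-\bbW(u_{(i)},u_{(j)})|\leq \ccalL_\bbW(|u-u_{(i)}|+|v-u_{(j)}|)$ giving the factor $2\ccalL_\bbW$, and a Chung--Lu/Bernstein bound on $\|A-\mathbb{E}A\|$ rescaled by $1/n$) is sound.

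The gap is in the probability accounting, specifically the role of $x_2$. Your argument treats $n\geq 4/x_2$ purely as a deterministic sample-size requirement ``encoding the validity'' of the matrix concentration inequality, and you then combine only two high-probability events (spacing: $\geq 1-2x_1$; concentration: $\geq 1-\chi$) via a union bound. That would yield an additive failure probability of the form $2x_1+\chi$ with $x_2$ absent. But the stated probability involves three separate factors in $\chi$, $x_1$, and $x_2$, so the source must be tracking three distinct events. The $1-x_2$ factor corresponds to a genuinely stochastic event---not a deterministic side condition---namely that the random degree sequence (equivalently, the sampled $u_i$'s) satisfies the hypotheses required by the adjacency-matrix concentration bound; the requirement $n\geq 4/x_2$ is what makes that event hold with probability at least $1-x_2$. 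Once you recognize this third event, the multiplicative rather than additive form of the probability also makes sense: the three events are being composed by nesting/independence of the sampling of $\{u_i\}$ and the subsequent Bernoulli edge draws, not by a crude union bound. You should identify the degree-type condition explicitly, prove it holds with probability $\geq 1-x_2$ under $n\geq 4/x_2$, and then chain the three events to reproduce the stated probability.
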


Combining this bound with Weyl's inequality \eqref{eqn:weyls} yields the following operator convergence rate.

\begin{proposition}[Lipschitz case]
Let $G_n$ be a graph randomly sampled from a Lipschitz graphon $\bbW$ with induced graphon $\bbW_{G_n}$. With high probability,
\[
|\lambda_i(\bbW_{G_{n}}) - \lambda_i(\bbW)| = \mathcal{O}\!\left(\sqrt{\tfrac{\log n}{n}}\right).
\]
\end{proposition}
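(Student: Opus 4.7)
The plan is to mirror the standard-case argument, but replace the cut-distance sampling bound with the direct operator-norm bound of Lemma~\ref{lipbound} that the Lipschitz assumption makes available. The key gain over the standard case is that Lemma~\ref{lipbound} controls $\|T_{\bbW_{G_n}} - T_\bbW\|_2$ directly, so we avoid paying the square-root factor incurred by passing through Lemma~\ref{prop4ruiz}; this is precisely what upgrades the rate from $(\log n)^{-1/4}$ to $\sqrt{(\log n)/n}$.

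First, I would apply Weyl's inequality~\eqref{eqn:weyls} to reduce the eigenvalue gap to the operator-norm distance,
\[
|\lambda_i(\bbW_{G_n}) - \lambda_i(\bbW)| \leq \|T_{\bbW_{G_n}} - T_\bbW\|_2,
\]
and then substitute the Lipschitz sampling estimate of Lemma~\ref{lipbound} to obtain, with probability at least $1 - \chi(1-2x_1)(1-x_2)$,
\[
|\lambda_i(\bbW_{G_n}) - \lambda_i(\bbW)| \leq \tfrac{2\ccalL_{\bbW}}{n}\log\!\Big(\tfrac{(n+1)^2}{\log(1-x_1)^{-1}}\Big) + \tfrac{1}{n}\sqrt{4n\log(2n/\chi)}.
\]
To convert this explicit inequality into a clean ``high probability'' statement, I would fix $x_1, x_2 \in (0, 0.3]$ as absolute constants and let $\chi = \chi(n)$ decay polynomially (for instance $\chi = 1/n$). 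The first term on the right then scales as $\mathcal{O}(\ccalL_\bbW \log n / n)$ and the second as $\mathcal{O}(\sqrt{(\log n)/n})$; since the latter dominates asymptotically, the claimed rate $\mathcal{O}(\sqrt{\log n / n})$ follows.

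The main subtlety -- more a caveat than a genuine obstacle -- is that Lemma~\ref{lipbound} is \emph{not} invariant under arbitrary measure-preserving relabelings of $\bbW$: the Lipschitz constant $\ccalL_\bbW$ depends on a fixed ordering of $[0,1]$, so the optimal relabeling trick used in the standard-case proof (where we were free to pick the $\phi$ achieving the cut-distance infimum) is no longer available. The proof therefore tacitly assumes that the sampled latent positions $u_i$ are taken in sorted order, or equivalently that a consistent labeling of $\bbW_{G_n}$ relative to $\bbW$ has been fixed beforehand (as can be achieved in practice by a sort-and-smooth preprocessing step). With this alignment in place, the computation above is routine.
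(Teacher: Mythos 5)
Your proof is essentially the same as the paper's: Weyl's inequality to reduce the eigenvalue gap to the operator-norm distance, then Lemma~\ref{lipbound} to supply the rate. One small imprecision worth flagging: Lemma~\ref{lipbound} bounds the \emph{Hilbert--Schmidt} (kernel $L^2$) norm $\|\bbW-\bbW_{G_n}\|_2$, not the induced operator norm $\|T_{\bbW_{G_n}}-T_\bbW\|_2$ ``directly'' as you write; the missing link is the standard inequality $\|T_\bbK\|_2 \leq \|\bbK\|_2$ (operator norm dominated by Hilbert--Schmidt norm), which the paper's proof states explicitly and you are tacitly using when you ``substitute.'' Your elaborations --- fixing $x_1,x_2$ as constants, letting $\chi = \chi(n)$ decay to get a clean high-probability statement, and the caveat that the fixed-labeling assumption breaks relabeling invariance --- are correct, align with the paper's discussion in Section~\ref{ssec:comparison}, and are a useful addition to the paper's very terse argument.
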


\begin{proof}
Weyl's inequality gives
$|\lambda_i(\bbW_{G_{n}}) - \lambda_i(\bbW)| \leq \|T_{\bbW_{G_n}} - T_{\bbW}\|_2$.
Since $\|T_\bbK\|_2 \leq \|\bbK\|_2$, $\|T_{\bbW_{G_n}} - T_{\bbW}\|_2 \leq \|\bbW_{G_{n}} - \bbW\|_2$, and Lemma~\ref{lipbound} directly implies the stated rate $\mathcal{O}(\sqrt{\log(n)/n})$.
\end{proof}

Imposing global Lipschitz regularity yields substantially faster convergence rates than those obtained for unrestricted graphons. This improvement, however, comes at the cost of permutation invariance; a suitable labeling of inputs must either be provided or learned. We therefore conclude our analysis by examining a weaker structural assumption.

\subsection{Piecewise-Lipschitz case}
\label{ssec:piecelipschitz}

A more forgiving assumption is piecewise-Lipschitz continuity. 
As in \cite{Avella_Medina_2020}, we consider piecewise-Lipschitz graphons $\bbW$ that can be partitioned into at most $K^2$ ``pieces,'' where $K$ is the total number of non-overlapping intervals in the partition. 
Each piece is Lipschitz with constant $\ccalL_k$, and we let $\ccalL = \max_k \ccalL_k$ for consistency with the global Lipschitz notation in Sec.~\ref{ssec:lipschitz}. 
An example is presented in Figure~\ref{fig:plgraphon}.

Note that the different pieces of the graphon can have different Lipschitz constants, and $\bbW$ can be discontinuous along partition boundaries. Under the piecewise-Lipschitz assumption, the HS norm difference between a graphon and a graph randomly sampled from it can be bounded as follows.

\begin{lemma}[Adapted from \cite{ruiz2021graphon, Avella_Medina_2020}]
\label{plipbound}
Let $G_{n}$ be a graph randomly sampled from a piecewise-Lipschitz graphon $\bbW$ with associated induced graphon $\bbW_{G_{n}}$. With probability at least $(1-\chi) (1-\delta')$,
\begin{equation}
\begin{split}
\|\bbW - \bbW_{G_n}\|_{2} &\leq 2\sqrt{(\ccalL^2-K^2){d_n}^2 + Kd_n}\\
&\quad+\frac{1}{n}\sqrt{4n\log \!\big(\tfrac{2n}{\chi}\big)},
\end{split}
\end{equation}
where $\delta' \in (ne^{-n/5}, e^{-1})$, $\chi > 0$, and $d_n = \tfrac{1}{n} + \sqrt{\tfrac{8\log(n/\delta)}{n+1}}$.
\end{lemma}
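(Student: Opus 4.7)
The plan is to follow the two-stage decomposition used in \cite{ruiz2021graphon, Avella_Medina_2020}. Given the latent points $u_1,\ldots,u_n \in [0,1]$ that generate $G_n$, I would introduce the deterministic template graphon
\[
\tdW(u,v) = \bbW(u_i, u_j), \qquad u \in I_i,\ v \in I_j,
\]
where $\{I_i\}$ is the regular $n$-partition of $[0,1]$, and apply the triangle inequality
\[
\|\bbW - \bbW_{G_n}\|_2 \leq \|\bbW - \tdW\|_2 + \|\tdW - \bbW_{G_n}\|_2.
\]
This splits the analysis into a conditionally deterministic ``point-sampling'' error (first term, controlled with probability $1-\delta'$) and a purely stochastic ``edge-sampling'' error (second term, controlled with probability $1-\chi$).

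For the first term, I would classify the cells $I_i \times I_j$ as \emph{interior} (entirely contained in one of the at-most-$K^2$ Lipschitz pieces of $\bbW$) or \emph{boundary} (crossing a partition line along at least one axis). Piecewise-Lipschitz continuity bounds $|\bbW - \tdW|$ on an interior cell by a constant multiple of $\ccalL_k d_n$, whereas on a boundary cell only the uniform estimate $|\bbW - \tdW| \leq 1$ is available. Counting at most $O(K)$ boundary rows and columns along each axis, then summing squared errors cell by cell, produces the $(\ccalL^2-K^2)d_n^2 + K d_n$ expression inside the square root. The quantity $d_n$ uniformly bounds the deviation of each $u_i$ from the endpoints of its containing interval, and its stated form $d_n = 1/n + \sqrt{8\log(n/\delta)/(n+1)}$ follows from standard order-statistic concentration for $n$ i.i.d.\ uniform samples, holding with probability at least $1-\delta'$.

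For the second term, I would condition on $\{u_i\}$ so that the entries $[A]_{ij}$ become independent $\mathrm{Bernoulli}(\bbW(u_i,u_j))$ random variables. The cell-wise $L^2$ mass of $\tdW - \bbW_{G_n}$ is then a sum of mean-zero, bounded random variables to which Hoeffding's inequality applies. A union bound over the $\binom{n}{2}$ edge slots yields the tail $\tfrac{1}{n}\sqrt{4n\log(2n/\chi)}$ with probability at least $1-\chi$, exactly as in the proof of Lemma \ref{lipbound}. Independence of the point-sampling and edge-sampling randomness then multiplies the two probabilities into the stated $(1-\chi)(1-\delta')$.

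The main obstacle is the combinatorial bookkeeping in the piecewise-Lipschitz step: one must carefully enumerate how cells can simultaneously straddle boundaries along both axes, and reconcile the per-piece constants $\ccalL_k$ with the uniform bound $\ccalL$ so that the $(\ccalL^2-K^2)d_n^2$ correction emerges rather than a cruder $\ccalL^2 d_n^2$ estimate. The order-statistic control of $d_n$ and the Hoeffding step are essentially routine, following the templates already used in \cite{Avella_Medina_2020} and in the proof of Lemma \ref{lipbound}.
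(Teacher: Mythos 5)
The paper itself offers no proof of Lemma~\ref{plipbound}; it is stated as an adaptation of results in \cite{ruiz2021graphon, Avella_Medina_2020}. Your two-stage decomposition through a template graphon $\tdW$, the interior/boundary cell classification for the point-sampling term, and the order-statistic control of $d_n$ are all consistent with the standard argument in those references, and you correctly flag the constant bookkeeping for $(\ccalL^2 - K^2)d_n^2 + Kd_n$ as the delicate part.

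There is, however, a genuine gap in your treatment of the edge-sampling term. You describe $\|\tdW - \bbW_{G_n}\|_2^2$ as ``a sum of mean-zero, bounded random variables to which Hoeffding's inequality applies,'' followed by a union bound over $\binom{n}{2}$ edge slots. Writing it out, $\|\tdW - \bbW_{G_n}\|_2^2 = \tfrac{1}{n^2}\sum_{i,j}(A_{ij} - p_{ij})^2$ with $p_{ij} = \bbW(u_i,u_j)$: the summands $(A_{ij}-p_{ij})^2$ have mean $p_{ij}(1-p_{ij})$, not zero, so the Hilbert--Schmidt norm of the edge noise concentrates around $\tfrac{1}{n^2}\sum p_{ij}(1-p_{ij}) = \Theta(1)$ and does \emph{not} vanish as $n \to \infty$. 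A per-edge Hoeffding bound is also vacuous, since $|A_{ij}-p_{ij}| \le 1$ holds deterministically for Bernoulli entries, so the union bound over edges produces nothing. The term $\tfrac{1}{n}\sqrt{4n\log(2n/\chi)}$ in the lemma is in fact a \emph{spectral} (operator-norm) concentration bound for the centered adjacency matrix of the form $\|A - \mathbb{E}[A \mid u]\|_{\mathrm{op}} \lesssim \sqrt{n\log(n/\chi)}$, as given by Chung--Radcliffe or matrix Bernstein, scaled by $1/n$. Your proof therefore must carry out the second stage in the operator norm, not the $L^2$ kernel norm; the two stages can then be stitched together because the deterministic point-sampling stage bounded in HS norm dominates the operator norm via $\|T_\bbK\|_2 \le \|\bbK\|_2$, and the operator norm is what Proposition~4 (via Weyl's inequality) ultimately needs anyway.
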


Combining this bound with Weyl's inequality \eqref{eqn:weyls} yields the following operator convergence rate.

\begin{proposition}[Piecewise-Lipschitz case]
Let $G_n$ be a graph randomly sampled from a piecewise-Lipschitz graphon $\bbW$ with induced graphon $\bbW_{G_n}$. With high probability,
\[
|\lambda_i(\bbW_{G_n}) - \lambda_i(\bbW)| = \mathcal{O}\!\left(\sqrt[4]{\tfrac{\log n}{n}}\right).
\]
\end{proposition}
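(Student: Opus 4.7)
The plan is to mirror the structure of the Lipschitz case proof, replacing Lemma~\ref{lipbound} with the piecewise-Lipschitz bound in Lemma~\ref{plipbound}. First I would invoke Weyl's inequality \eqref{eqn:weyls} to reduce the eigenvalue deviation to an operator-norm deviation between $T_{\bbW}$ and $T_{\bbW_{G_n}}$. Then I would use the fact that the $L_2$-induced operator norm of an integral kernel operator is dominated by the Hilbert--Schmidt norm of its kernel, so that $\|T_{\bbW} - T_{\bbW_{G_n}}\|_2 \leq \|\bbW - \bbW_{G_n}\|_2$. Applying Lemma~\ref{plipbound} then yields, with probability at least $(1-\chi)(1-\delta')$,
\[
|\lambda_i(\bbW_{G_n}) - \lambda_i(\bbW)| \leq 2\sqrt{(\ccalL^2-K^2)d_n^2 + K d_n} + \tfrac{1}{n}\sqrt{4n\log(2n/\chi)},
\]
with $d_n = \tfrac{1}{n} + \sqrt{\tfrac{8\log(n/\delta)}{n+1}}$.

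The remaining work is purely asymptotic. Since $d_n = \mathcal{O}(\sqrt{\log n / n})$, the term $(\ccalL^2-K^2)d_n^2$ inside the square root is $\mathcal{O}(\log n / n)$, whereas $K d_n$ is $\mathcal{O}(\sqrt{\log n / n})$ and thus dominates for large $n$. Taking the square root of the dominant term gives $\mathcal{O}(\sqrt[4]{\log n / n})$, while the additive subgaussian fluctuation term $\tfrac{1}{n}\sqrt{4n \log(2n/\chi)} = \mathcal{O}(\sqrt{\log n / n})$ is of strictly smaller order. Together these give the claimed rate.

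The main point requiring care, rather than a genuine obstacle, is identifying the correct dominant term among the two contributions under the square root: the linear-in-$d_n$ piece governed by the number of pieces $K$ is what slows the rate from $\mathcal{O}(\sqrt{\log n/n})$ in the globally Lipschitz case down to $\mathcal{O}(\sqrt[4]{\log n/n})$. One should also note that the bound depends implicitly on $\ccalL$ and $K$ as constants (absorbed into the $\mathcal{O}$), so the high-probability statement inherits the joint probability $(1-\chi)(1-\delta')$ from Lemma~\ref{plipbound} after suitable choice of $\chi,\delta'$.
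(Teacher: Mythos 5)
Your proof follows the same route as the paper's: Weyl's inequality to pass from eigenvalue gaps to operator-norm gaps, domination of the operator norm by the Hilbert--Schmidt norm, and then Lemma~\ref{plipbound}. The paper compresses the final asymptotic step to a single assertion, whereas you correctly spell out that the $K d_n = \mathcal{O}(\sqrt{\log n/n})$ term under the square root dominates the $(\ccalL^2-K^2)d_n^2 = \mathcal{O}(\log n/n)$ term and the additive $\mathcal{O}(\sqrt{\log n/n})$ fluctuation, yielding the $\mathcal{O}(\sqrt[4]{\log n/n})$ rate; this is exactly the reasoning the paper leaves implicit.
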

\begin{proof}
Weyl's inequality gives $|\lambda_i(\bbW_{G_n}) - \lambda_i(\bbW)| \leq \|T_{\bbW_{G_n}} - T_{\bbW}\|_2$. Since $\|T_\bbK\|_2 \leq \|\bbK\|_2$, Lemma~\ref{plipbound} implies the stated rate $\mathcal{O}(\sqrt[4]{\tfrac{\log n}{n}})$.
\end{proof}

Replacing global Lipschitz regularity with a piecewise Lipschitz assumption preserves permutation invariance within partitions but not across them. The resulting bound occupies an intermediate position between the previous regimes in both permutation robustness and convergence rate: it is more flexible than the globally Lipschitz setting but yields slower rates, while remaining less flexible than the fully permutation-invariant case yet achieving faster convergence.

\subsection{Discussion}
\label{ssec:comparison}
The bounds presented in Section \ref{sec:bounds} illustrate the impact of imposing additional structural assumptions on the graphon $\bbW$, particularly as $n$ grows. Imposing structural assumptions results in substantially faster bounds than those obtained for arbitrary graphons. In particular, the bounds shrink faster under the piecewise-Lipschitz assumption, and even more rapidly under the global Lipschitz assumption.    

GNN transferability bounds typically decompose into two factors: a constant term that depends on structural and architectural assumptions (such as GNN depth), and a rate term that decays asymptotically with $n$, determined by spectral convergence. The focus of this work is on the operator-level rates, which can be directly substituted into any existing transferability bound, e.g., \cite{transferabilityruiz}, to yield guarantees under different structural assumptions on the underlying graphon. 

When considering the application of the operator to a specific input signal (i.e., the function $X$ in \eqref{eqn:graphon_op}), a further consideration in transferability is input signal convergence. In the fixed labeling case, this has been established by showing that graph signals converge to Lipschitz graphon signals in $L_2$~\cite{transferabilityruiz,ruiz2021graphon}. In the general case, Levie et al.~\cite{levie} introduced a cut metric for graphon–signal pairs, which controls convergence of both structure and signals simultaneously. Importantly, incorporating these signal convergence errors does not alter the rates derived here, with the resulting transferability bounds inheriting exactly the same asymptotic decay rates.

\section{Numerical examples}
\label{ssec:examples}

%\red{Here is where we should talk about sort and smooth.}

We illustrate bound performance using three examples: a synthetic graphon, the Cora dataset, and the PubMed dataset. The real-world dataset statistics are provided in Table \ref{tab:datasets}.

\begin{table}
\centering
\label{tab:datasets}
\begin{tabular}{lccc}
\toprule
Dataset & Nodes & Edges & Average Degree \\
\midrule
Cora   & 2,485  & 5,069   & 4.079 \\
PubMed & 11,652 & 22,523  & 3.866 \\
\bottomrule
\end{tabular}
\vspace{6pt}
\caption{Dataset statistics.}
\end{table}

The synthetic graphon is generated from a smooth function over $[0,1]^2$ to define connection probabilities. For Cora, we use the full adjacency matrix as the graphon, while for PubMed we use the adjacency matrix of a randomly sampled subset. We normalize the graphon adjacency matrices, via the $L_1$ norm, and reorder by node degree. Leveraging polynomial interpolation we determine an approximate Lipschitz constant for the graphons and particular graphon ``pieces". Sorting and interpolating is consistent with the sort-and-smooth approach used in graphon estimation literature \cite{chan2014consistenthistogramestimatorexchangeable}.

%An order of magnitude estimate plot of the bounds is presented in Figure \ref{fig:boundcompare}. 
Figures~\ref{fig: synth}, \ref{fig: cora}, and \ref{fig: pubmed} compare the first three eigenvalues of sampled subgraphs with the corresponding graphon eigenvalues and their theoretical upper bounds. Figure~\ref{fig: synth} illustrates the behavior on a synthetic graphon, where the separation between the general and structured bounds is most pronounced. 

\begin{figure}[t]
\centering
\includegraphics[width=\linewidth]{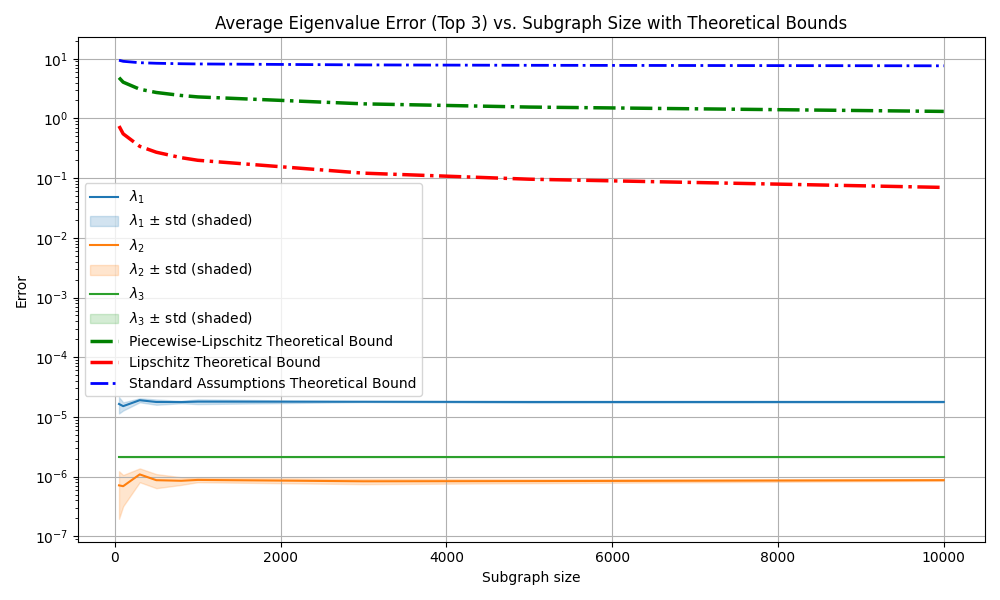}
\caption{Synthetic graphon $\textbf{f} = x \cdot y$ with Lipschitz constant $0.0265$ and largest piecewise-Lipschitz constant $0.353$.}
\label{fig: synth}
\end{figure}

Figures~\ref{fig: cora} and~\ref{fig: pubmed} show similar trends on real-world datasets, though the gaps between bounds are less extreme due to additional empirical structure in the data. 

\begin{figure}[t]
\centering
\includegraphics[width=\linewidth]{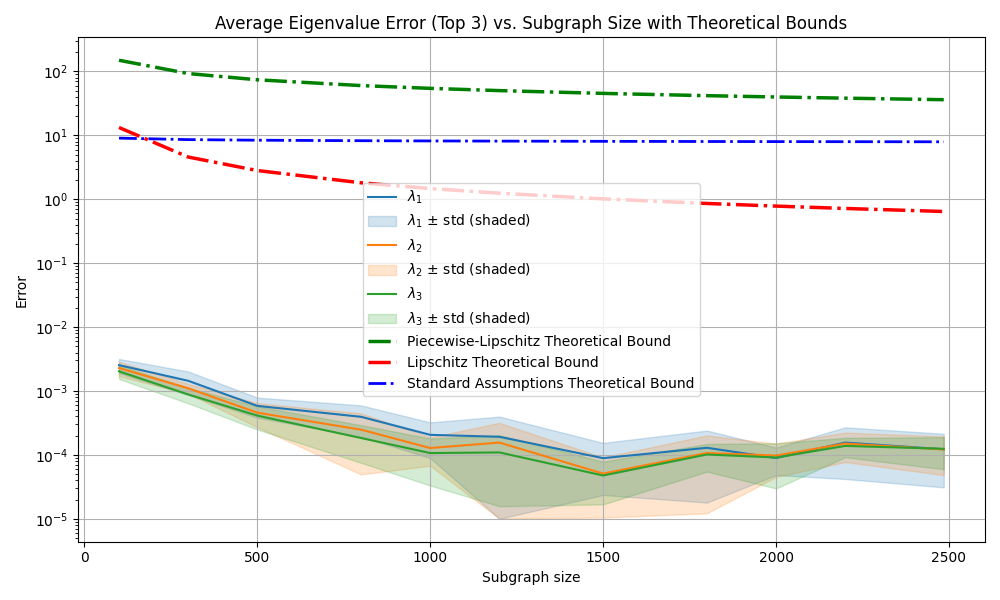}
\caption{Cora graphon with Lipschitz constant $60.653$ and largest piecewise-Lipschitz constant $99.5$.}
\label{fig: cora}
\end{figure}

\begin{figure}[t]
\centering
\includegraphics[width=\linewidth]{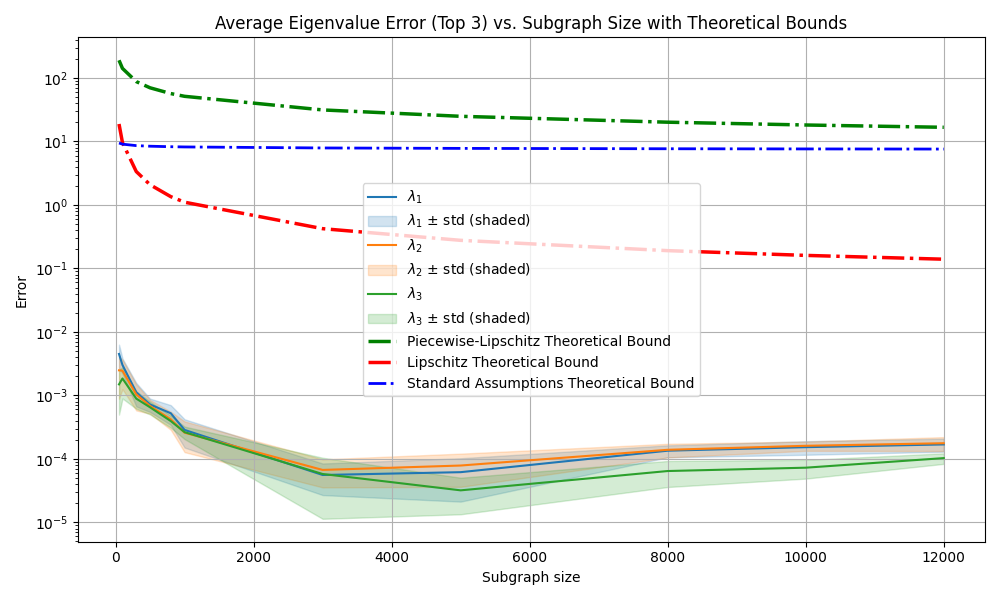}
\caption{PubMed graphon with Lipschitz constant $58.534$ and largest piecewise-Lipschitz constant $199.24$.}
\label{fig: pubmed}
\end{figure}

Across all three settings, bounds derived under stronger structural assumptions perform better overall. While the standard permutation-invariant bound may initially appear tighter than the piecewise-Lipschitz bound, the latter eventually overtakes it as the sample size increases. None of the bounds are fully tight, suggesting that incorporating additional graph structure or refined regularity assumptions could further improve these guarantees.

\section{Conclusion}
\label{sec:conclusion}

Upper bounds on GNN convergence rates are key for analyzing convolutional processing on graph sequences with graphon limits. Few approaches exist, with major contributions by \cite{transferabilityruiz}, \cite{levie}, and \cite{Avella_Medina_2020} presented in the last decade. In this work, we present and compare different graphon-based bounds both theoretically and through numerical examples, consolidating results previously derived under heterogeneous assumptions into a single spectral norm framework. This unifying perspective clarifies the structural trade-offs underlying reported rates and enables direct comparison across graph families that were previously analyzed in isolation.

In addition, we derive explicit spectral upper bounds for piecewise-Lipschitz graphons, providing rate characterizations that are directly applicable to GNN stability and transferability analyses in moderately structured regimes. While none of the existing bounds are tight, our comparative framework highlights where additional structural information can most effectively sharpen guarantees. These results extend naturally to large-scale graphs and offer new opportunities to improve both theoretical understanding and practical deployment of GNNs in real-world tasks.

% -------------------------------------------------------------------------
\bibliographystyle{IEEEbib}
\bibliography{refs}

\end{document}